\DeclareMathOperator{\subjectto}{s.t.}
\DeclareMathOperator{\tovec}{vec}
\newcolumntype{H}{>{\setbox0=\hbox\bgroup}c<{\egroup}@{}}
\newtheorem{lemma}{Lemma}
\newtheorem{proposition}{Proposition}
\newtheorem{theorem}{Theorem}
\newtheorem{definition}[theorem]{Definition}
\newcommand{\numitems}{k}
\newcommand{\permutations}{S_\numitems}
\newcommand{\plrebar}{\text{PL-REBAR}}
\newcommand{\plrelax}{\text{PL-RELAX}}
\newcommand{\scorefunction}{f}
\newcommand{\critic}{c}
\newcommand{\relaxcritic}{c_\phi}
\newcommand{\categoricalsamplerfunction}{H}
\newcommand{\plsamplerfunction}{H}
\newcommand{\uniform}{\operatorname{uniform}[0, 1]}
\newcommand{\gumbel}{\mathcal{G}}
\newcommand{\gumbelpdf}[1]{\phi_{#1}}
\newcommand{\gumbelcdf}[1]{\Phi_{#1}}
\newcommand{\continuousrv}{b_{\textit{cont}}}
\newcommand{\categoricalrv}{b}
\newcommand{\plrv}{b}
\newcommand{\gumbelrv}{z}
\newcommand{\uniformrv}{v}
\newcommand{\conditionalgumbelrv}{\tilde{z}}
\newcommand{\score}{\theta}
\newcommand{\scorecumsum}{\Theta}
\newcommand{\reparametrizationestimator}{\hat{g}_{\textit{reparam}}(\scorefunction)}
\newcommand{\someestimator}{\hat{g}_{\textit{CV}}(\scorefunction)}
\newcommand{\reinforceestimator}{\hat{g}_{\textit{REINFORCE}}(\scorefunction)}
\newcommand{\relaxestimator}{\hat{g}_{\textit{RELAX}}(\scorefunction)}
\newcommand{\rebarestimator}{\hat{g}_{\textit{REBAR}}(\scorefunction)}
\newcommand{\gumbelsoftmaxestimator}{\hat{g}_{\textit{Gumbel}}(\scorefunction)}
\newcommand{\citet}[1]{\citeauthor{#1} \shortcite{#1}}
\title{Low-variance Black-box Gradient Estimates for the Plackett-Luce Distribution}
\author{Artyom Gadetsky,\textsuperscript{\rm 1}\thanks{Both authors contributed equally to this work.}\thanks{Corresponding author. E-mail: artygadetsky@yandex.ru} Kirill Struminsky,\textsuperscript{\rm 1}\footnotemark[1] Christopher Robinson,\textsuperscript{\rm 2} \\
\Large \textbf{Novi Quadrianto,\textsuperscript{\rm 1, \rm 2} Dmitry Vetrov\textsuperscript{\rm 1\thanks{Samsung-HSE Laboratory}}}\\
\textsuperscript{\rm 1} National Research University Higher School of Economics \\
\textsuperscript{\rm 2} Predictive Analytics Lab (PAL), University of Sussex\\}
\begin{document}

\maketitle

\begin{abstract}
Learning models with discrete latent variables using stochastic gradient descent remains a challenge due to the high variance of gradient estimates. Modern variance reduction techniques mostly consider categorical distributions and have limited applicability when the number of possible outcomes becomes large. In this work, we consider models with latent permutations and propose control variates for the Plackett-Luce distribution. In particular, the control variates allow us to optimize black-box functions over permutations using stochastic gradient descent. To illustrate the approach, we consider a variety of causal structure learning tasks for continuous and discrete data. We show that our method outperforms competitive relaxation-based optimization methods and is also applicable to non-differentiable score functions.
\end{abstract}

\section{Introduction}
The vast majority of modern machine learning advancements share one central method - gradient-based optimization. Stochastic gradients give a scalable solution for learning, applicable when the loss function is too slow to compute due to the size of data or even intractable. The latter is often the case when the loss function includes an expectation over random latent variables. The objectives of this kind naturally arise in multiple settings, including probabilistic latent variable models \cite{neal1998view} and reinforcement learning \cite{williams1992simple}. Often the distribution of random variables also depends on the optimizable parameters of the loss function, which in turn makes gradient estimation harder and less reliable due to the high variance of stochastic gradients.

Despite the recent breakthroughs in gradient estimation for continuous latent variables \cite{kingma2013auto,rezende2014stochastic,mohamed2019monte}, gradient estimation for discrete latent variables remains a challenge. Currently, general-purpose estimators \cite{williams1992simple,mnih2014neural} remain unreliable and the state-of-the-art methods \cite{rebar,relax,yin2018arm} exclusively consider the categorical distribution. Although the reduction to the categorical case allows benefiting from gradient estimators for continuous relaxations, such solutions are hard to translate to discrete distributions with large support.

In this work, we consider a gradient estimator for the Plackett-Luce distribution, a distribution over permutations. Permutations naturally occur in various setting, such as ranking problems \cite{guiver2009bayesian}, optimal routing \cite{bello2016neural} and causal inference \cite{friedman2003}. However, the support of the distribution is superexponential in the number of items $\numitems$, which makes representing a distribution as a categorical distribution intractable even for dozens of items. At the same time, the Plackett-Luce distribution has $O(\numitems)$ parameters and allows sampling in $O(\numitems \log \numitems)$.

We translate the recent variance reduction techniques \cite{rebar,relax} to the case of Plackett-Luce distributions. Similarly to REBAR, we use the difference of the REINFORCE estimator and the reparametrized estimator for the relaxed model. In particular, we derive the conditional marginalization step \cite{rebar} for the Plackett-Luce case. In our experiments, we recast causal inference tasks as a variational optimization over permutations and solve it using a gradient optimization method. We show that our method outperforms competitive relaxation-based approaches for optimization over permutations \cite{urs,sinkhorn} for differentiable score functions and is applicable in a wider range of scenarios.

Our main contributions are the following:
\begin{itemize}
\item We derive a low-variance gradient estimator for the Plackett-Luce distribution.
\item We apply the gradient estimator to solve variational optimization tasks for black-box functions and concentrate primarily on causal inference tasks for continuous and discrete data.
\item For differentiable functions, we show that relaxation-based gradient optimization does not work out-of-the-box for causal inference tasks and propose additional constraints to achieve competitive results.
\end{itemize}

\section{A Brief Tour of Gradient Estimation}

We consider a general optimization task $\operatorname{min}_{\theta} \mathbb E_{p(\categoricalrv \mid \score)}[ \scorefunction(\categoricalrv)]$, where $\categoricalrv$ is a discrete random variable parametrized by $\score$. The expectation can be intractable, for instance when $b$ is a vector of categorical variables and the support of $b$ is exponential in the vector length. The standard solution is to construct a stochastic estimate for the gradient $\hat{g}(\scorefunction) := \frac{\partial}{\partial \score} \mathbb E_{p(\categoricalrv \mid \score)} [\scorefunction(\categoricalrv)]$ without explicitly computing the expectation. In this section, we briefly review the gradient estimation algorithms.

\subsection{REINFORCE}

The REINFORCE estimator \cite{williams1992simple} gives us a widely-applicable unbiased estimate for the gradient
\begin{equation}
\reinforceestimator = \scorefunction(\categoricalrv) \frac{\partial}{\partial \score} \log p(\categoricalrv \mid \score),\ \ \categoricalrv \sim p(\categoricalrv \mid \score).
\end{equation}
Although an unbiased gradient estimate is sufficient to guarantee convergence of stochastic gradient descent, in practice, the algorithm may not converge due to the high variance of the estimate \cite{rebar}. The variance of the REINFORCE estimator can be reduced using control variates. A Control variate is a function $\critic(\categoricalrv)$ with a zero mean $\mathbb E_{p(\categoricalrv \mid \score)} [\critic(b)] = 0$ that can be used to define another unbiased estimator
\begin{equation}
\someestimator = \reinforceestimator - \critic(\categoricalrv).
\end{equation}
The variance of the new estimator $\someestimator$ is lower than the variance of $\reinforceestimator$ if $\critic(\categoricalrv)$ is positively correlated with the random variable $\scorefunction(\categoricalrv)$. As an illustration, the gradient of probability $\frac{\partial}{\partial \score} \log p(\categoricalrv \mid \score)$ has zero mean, therefore it can be used as a control variate \cite{mnih2014neural}.

\subsection{Reparametrization Gradients for Continuous Relaxations}
The reparametrization trick \cite{kingma2013auto,rezende2014stochastic} is an alternative unbiased low-variance gradient estimator, applicable when $\scorefunction$ is differentiable and the latent variable $\continuousrv$ is continuous. The estimator represents the latent variable as a differentiable determinisitc transformation $\continuousrv = T(\uniformrv, \score)$ of a fixed distribution sample $\uniformrv$ and parameters $\score$ and estimates the gradient as
\begin{align}
& \reparametrizationestimator = \frac{\partial}{\partial \score} \scorefunction(\continuousrv) = \frac{\partial \scorefunction}{\partial T} \frac{\partial T}{\partial \theta}, \\
& \uniformrv_i \sim \uniform,\ i=1,\dots,\numitems.
\end{align}

Although the reparametrization trick is not applicable when the latent variable $\categoricalrv$ is discrete, \cite{jang2016categorical,maddison2016concrete} proposed the Gumbel-softmax estimator, a modification of the reparametrization trick for the relaxed categorical distribution.

To sample from a relaxed categorical distribution $p(\categoricalrv \mid \score)$ with probabilities $\tfrac{\exp \score_i}{\sum_j \exp \score_j}$, Gumbel-Softmax first samples a vector of independent Gumbel random variables $\gumbelrv_i \sim \gumbel(\score_i, 1), i=1,\dots,\numitems$
\begin{align}
& \gumbelrv_i = T(\score_i, \uniformrv_i) = \score_i - \log(-\log(\uniformrv_i))\\ 
& \uniformrv_i \sim \uniform,\ i=1,\dots,\numitems 
\end{align}
with location parameter $\theta$. According to the {\bf Gumbel-max trick} \cite{maddison2014sampling}, the index of the maximal element $H(z) = \operatorname{\arg}{\max} (z)$ is a categorical random variable with distribution $p(\categoricalrv \mid \score)$. Then, to make the sampler differentiable, the Gumbel-softmax trick replaces $\arg \max (z)$ with a relaxation $\operatorname{soft} \max(\gumbelrv) = \frac{1}{\sum \exp \gumbelrv_i} (\exp \gumbelrv_1, \dots, \exp \gumbelrv_\numitems)$. The gradient estimate is the reparametrization gradient for the relaxed categorical distribution:
\begin{align}
& \gumbelsoftmaxestimator = \frac{\partial}{\partial \score} \scorefunction(\categoricalrv) =\frac{\partial \scorefunction}{\partial \categoricalrv} \frac{\partial \categoricalrv }{\partial \gumbelrv} \frac{\partial \gumbelrv}{\partial \score}, \\
& \categoricalrv = \operatorname{soft}{\max}(z),\\
& \gumbelrv_i \sim \gumbel(\theta_i, 1), i=1,\dots,\numitems.
\end{align}
The resulting reparametrization gradient $\gumbelsoftmaxestimator$ has much lower variance than $\reinforceestimator$, but is generally biased due to the relaxation.

\subsection{Relaxation-based Control Variates}

Recently, \citet{rebar} and \citet{relax} proposed control variates for REINFORCE estimator based on the relaxed conditional distribution. Both works use the REINFORCE gradient estimator for the relaxed categorical distribution as a control variate for the non-relaxed estimator. To eliminate the bias of the REINFORCE estimator, they subtract the low-variance reparametrization gradient estimator. 

The key insight of \citet{rebar} is the conditional marginalization step used to correlate the non-relaxed REINFORCE estimator and the control variate. Importantly, the conditional marginalization relies on reparametrization trick for the conditional distribution $p(\gumbelrv \mid \categoricalrv, \score)$, obtained from the joint distribution $p(\categoricalrv, \gumbelrv \mid \score) = p(\categoricalrv | \gumbelrv) p(\gumbelrv \mid \score)$ of the Gumbel random vector $\gumbelrv$ and the output of the Gumbel-max trick $\categoricalrv = \categoricalsamplerfunction(\gumbelrv) = \operatorname{\arg}{\max} (\gumbelrv)$. \citet{rebar} derive a reparametrizable sampling scheme for $p(\gumbelrv \mid \categoricalrv, \score)$
\begin{equation} \label{categoricalconditionalgumbelsampler} 
\conditionalgumbelrv_i = \begin{cases}
                 -\log(-\log \uniformrv_i) & i = \categoricalrv \\
                 -\log\left( -\tfrac{\log \uniformrv_i}{\exp \score_i} + \exp(-\conditionalgumbelrv_\categoricalrv) \right) & i \neq \categoricalrv
                 \end{cases},
\end{equation}
where vector $\uniformrv$ is a uniform i.i.d. vector $\uniformrv \sim \uniform^\numitems$. This gives a two-step generative process for the distribution $p(\gumbelrv \mid \categoricalrv, \score)$. On the first step we sample the maximum variable $\uniformrv_\categoricalrv$ from the Gumbel distribtuion and on the second step we sample the other variables $\uniformrv_i, i \neq \categoricalrv$ from the Gumbel distribution trunctated  at $\conditionalgumbelrv_{\categoricalrv}$ with location parameter $\score_i$.

The unbiased RELAX estimator from \citet{relax} is
\begin{align} \label{relaxestimator}
\relaxestimator =& [\scorefunction(\categoricalrv) - \relaxcritic(\conditionalgumbelrv)]\frac{\partial}{\partial \score} \log p(\categoricalrv \mid \score) \nonumber \\
 & + \frac{\partial}{\partial \score} \relaxcritic(\gumbelrv) - \frac{\partial}{\partial \score} \relaxcritic(\conditionalgumbelrv) \\
 \categoricalrv = \categoricalsamplerfunction(\gumbelrv),\ \gumbelrv & \sim p(\gumbelrv \mid \score),\ \conditionalgumbelrv \sim p(\gumbelrv \mid \categoricalrv, \score)
\end{align}
where $\relaxcritic(\gumbelrv)$ is a parametric function optimized to reduce the variance of the estimator.

Similarly, for a differentiable function $\scorefunction$ the REBAR estimator by \citet{rebar} uses the function $\scorefunction$ with the relaxed argument $\operatorname{soft}\max(\gumbelrv)$ and tunes the scalar parameter $\eta$
\begin{align} \label{rebarestimator}
\rebarestimator =& [\scorefunction(\categoricalrv) - \eta \scorefunction(\operatorname{soft}\max(\conditionalgumbelrv))]\frac{\partial}{\partial \score} \log p(\categoricalrv \mid \score) \nonumber \\
 & + \eta \frac{\partial}{\partial \score} \scorefunction(\operatorname{soft}\max(\gumbelrv)) \nonumber \\ 
 & - \eta \frac{\partial}{\partial \score} \scorefunction(\operatorname{soft}\max(\conditionalgumbelrv)) \\
 \categoricalrv = \categoricalsamplerfunction(\gumbelrv),\ \gumbelrv & \sim p(\gumbelrv \mid \score),\ \conditionalgumbelrv \sim p(\gumbelrv \mid \categoricalrv, \score)
\end{align}

\section{Constructing Control Variates for the Plackett-Luce Distribution} \label{plpropositionsection}
In this paper, we extend the stochastic gradient estimators $\rebarestimator$ and $\relaxestimator$ from the categorical distribution to the Plackett-Luce distribution. With a slight abuse of notation, below we use letter $\plrv$ to denote an integer vector $\plrv = (\plrv_1,\dots,\plrv_\numitems) \in \permutations$ that represent a permutation, $\score$ to denote the parameters of the Plackett-Luce distribution and $p(\plrv \mid \score)$ to denote the Plackett-Luce distribution.

The goal of this section is to define the two components required to apply the aforementioned gradient estimators: the mapping $\categoricalrv = \plsamplerfunction(\gumbelrv)$ and the two  reparametrizable conditional distributions $p(\gumbelrv \mid \score)$ and $p(\gumbelrv | \plrv, \score)$. After this we apply the estimators as defined in eq.~\ref{relaxestimator} and eq.~\ref{rebarestimator}, but to emphasize the difference we refer to them as \plrelax \ and \plrebar.

 \begin{definition}
 The Plackett-Luce distribution \cite{luce2005individual,plackett1975analysis} with scores $\score = (\score_1, \dots, \score_\numitems)$ is a distribution over permutations $S_k$ with the probability of outcome $\plrv \in \permutations$
 \end{definition}
 \begin{equation}\label{plprob}
 p(\plrv | \score) = \prod_{j=1}^\numitems \frac{\exp \score_{\plrv_j}}{\sum_{u=j}^\numitems \exp \score_{\plrv_u}}.
 \end{equation}
 Intuitively, a sample from the Plackett-Luce distribution $\plrv = (\plrv_1,\dots,\plrv_\numitems)$ is generated as a sequence of samples from categorical distributions. The first component $\plrv_1$ comes from the categorical distribution with logits $\score$, then the second components $\plrv_2$ comes from the categorical distribution with the logits $\score$ without the component $\theta_{\plrv_1}$ and so on. 
 
 The Plackett-Luce can be used for variational optimization \cite{staines2012variational}. Indeed, at the lower temperatures $\score \rightarrow \tfrac{\score}{T}, T \ll 1$ the distribution converges to a divergent distribution. The mode of the Plackett-Luce distribution is the descending order permutation of the scores $\plrv^0: \score_{\plrv_1^0} \geq \dots \geq \score_{\plrv_\numitems^0}$, because $\plrv^0$ permutation maximizes each factor in the product in eq.~\ref{plprob}.
 
 Now we will give an alternative definition of the Plackett-Luce distribution. 
 \begin{lemma}\label{gumbelforpl} (appears in \cite{urs,yellott1977relationship}) Let $\gumbelrv$ be a vector of $\numitems$ independent Gumbel random variables with location parameters specified by score vector $\score$
 \begin{equation}
 \gumbelrv_i = \score_i - \log(-\log(\uniformrv_i)),\ \uniformrv_i \sim \uniform.
 \end{equation}
 Then for a permutation $\plrv \in \permutations$ the probability of event $\{\gumbelrv_{\plrv_1} \geq \dots \geq \gumbelrv_{\plrv_\numitems}\}$ is 
 \begin{equation}
 p(\gumbelrv_{\plrv_1} \geq \dots \geq \gumbelrv_{\plrv_\numitems}) = \prod_{j=1}^\numitems \frac{\exp \score_{\plrv_j}}{\sum_{u=j}^\numitems \exp \score_{\plrv_u}}.
 \end{equation}
 \end{lemma}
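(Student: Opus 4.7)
The plan is to prove the identity by induction on the number of items $k$, peeling off one factor at a time using the Gumbel-max trick. For the base case $k = 1$ both sides equal $1$. For the inductive step I would decompose the event $\{z_{b_1} \geq \dots \geq z_{b_k}\}$ as the intersection of $A := \{z_{b_1} = \max_{u} z_{b_u}\}$ with the ordering event $\{z_{b_2} \geq \dots \geq z_{b_k}\}$ on the remaining indices. The Gumbel-max trick, already cited in the excerpt, immediately yields $P(A) = \frac{\exp \theta_{b_1}}{\sum_{u=1}^{k} \exp \theta_{b_u}}$, which matches the first factor in equation~\ref{plprob}.

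The crux of the induction is to show that, after marginalizing over the value of $z_{b_1}$, the conditional distribution of $(z_i)_{i \neq b_1}$ given $A$ again induces the Gumbel-max structure on the remaining $k-1$ indices. I would first condition further on $z_{b_1} = t$: by independence of the original Gumbels, $(z_i)_{i \neq b_1}$ given $z_{b_1} = t$ and $A$ is a product of independent Gumbels with locations $\theta_i$ each truncated above at $t$. The key density calculation is the factorization
\[ \phi_{\theta_j}(s)\prod_{i \neq j}\Phi_{\theta_i}(s) \;=\; \frac{\exp\theta_j}{\sum_i \exp\theta_i}\;\phi_{\log\sum_i \exp\theta_i}(s), \]
which, integrated from $-\infty$ to $t$ and divided by $\prod_i \Phi_{\theta_i}(t)$, shows that the argmax of these commonly-truncated Gumbels is categorical with probabilities $\frac{\exp \theta_i}{\sum_{u \neq b_1}\exp\theta_{b_u}}$, independent of $t$. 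Marginalizing out $t$ therefore changes nothing, and applying the inductive hypothesis to the $k-1$ remaining variables yields $\prod_{j=2}^{k} \frac{\exp\theta_{b_j}}{\sum_{u=j}^{k}\exp\theta_{b_u}}$. Multiplying by $P(A)$ produces the Plackett-Luce formula.

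The main obstacle is the truncated Gumbel-max identity: one must verify that conditioning on \emph{which} index attains the maximum does not disturb the Gumbel-max structure on what remains, even though conditioning on $A$ couples the remaining variables through the truncation level $z_{b_1}$. The identity above, which relies on the specific form $\Phi_\theta(s) = \exp(-\exp(\theta - s))$ so that products of Gumbel CDFs collapse into a single Gumbel CDF with location $\log\sum_i \exp \theta_i$, is precisely what makes the truncation level irrelevant and unlocks the induction.
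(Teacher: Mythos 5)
Your argument is sound and rests on exactly the same computational engine as the paper: your key identity
\[
\phi_{\theta_j}(s)\prod_{i\neq j}\Phi_{\theta_i}(s) \;=\; \tfrac{\exp\theta_j}{\sum_i \exp\theta_i}\,\phi_{\log\sum_i\exp\theta_i}(s)
\]
is precisely Lemma~\ref{appendixprooflemma} of the appendix (the additive property $\Phi_\mu\Phi_\nu=\Phi_{\log(e^\mu+e^\nu)}$), and your observation that the common truncation level cancels is the same cancellation the paper exploits. The packaging differs, however. The paper never proves Lemma~\ref{gumbelforpl} directly (it is cited from prior work); instead, its appendix derivation of Proposition~\ref{plconditionalreparametrization} factorizes the full joint density $p(\plrv,\gumbelrv\mid\score)$ into $p(\plrv\mid\score)\,p(\gumbelrv\mid\plrv,\score)$ by telescoping c.d.f.'s through the product of densities, and Lemma~\ref{gumbelforpl} falls out as a corollary once $\gumbelrv$ is integrated away. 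Your route is the probabilistic dual of that algebra --- peel off the argmax, invoke a truncated Gumbel-max trick, recurse --- and it buys a cleaner standalone proof of the marginal ordering probability, at the cost of not producing the conditional reparametrization (the chain of truncated Gumbels) that the paper actually needs for its estimator.

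One step needs tightening. As written, your inductive hypothesis concerns untruncated independent Gumbels, but after conditioning on $A$ and on $z_{b_1}=t$ the remaining variables are commonly truncated at $t$, and after marginalizing $t$ they are a mixture of such truncated families --- so the hypothesis does not literally apply to them. Your computation only establishes that the \emph{argmax} probability among the truncated variables is independent of $t$; to conclude "marginalizing out $t$ changes nothing" for the full ordering event you should induct on the stronger statement that for independent Gumbels truncated above at a common level $t\in(-\infty,+\infty]$ the ordering probability equals the Plackett--Luce product and is independent of $t$. Your own argmax calculation is exactly the inductive step for that strengthened claim (condition on the value $s$ of the new maximum; the rest are truncated at $s$; apply the hypothesis with level $s$), so the fix is purely a matter of bookkeeping rather than a missing idea.
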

 Similarly to the {\bf Gumbel-max trick}, Lemma \ref{gumbelforpl} shows that an order of a Gumbel-distributed vector is distributed according to the Plackett-Luce distribution. Following the lemma, for Plackett-Luce distributions we define $p(\gumbelrv \mid \score)$ to be a Gumbel-distributed vector and $H(z)$ to be a sorting operation
 \begin{align}
 & \gumbelrv_i \sim \gumbel(\score_i, 1),\ i=1,\dots,\numitems \\
 & \plsamplerfunction(\gumbelrv) = \arg \operatorname{sort}(\gumbelrv)
 \end{align}
 
 Our principal discovery is that, similarly to the categorical case, the conditional distribution $p(\gumbelrv | \plrv, \score)$ factorizes into a sequence of truncated Gumbel distributions. As a consequence, the distribution is reparametrizable and can be used to construct a control variate for a gradient estimator.

 \begin{proposition} \label{plconditionalreparametrization}
  Let $p(\plrv, \gumbelrv \mid \score)$ be the joint distribution with $\gumbelrv_i \sim \gumbel(\score_i, 1)$, $\plrv = \arg \operatorname{sort} (\gumbelrv)$ and normalized parameters $\sum_{j=1}^\numitems \exp \score_j = 1$. Then for uniform i.i.d samples $\uniformrv_i \sim \uniform$ and $\scorecumsum_i = \sum_{j=i}^{\numitems} \exp \score_{\plrv_j}$ for $i=1,\dots,\numitems$ the vector $\conditionalgumbelrv = (\conditionalgumbelrv_1, \dots, \conditionalgumbelrv_\numitems)$
\begin{equation} \label{conditionalgumbelsampler}
\conditionalgumbelrv_{\plrv_i} = \begin{cases}
    - \log(-\log \uniformrv_{i}) & i=1 \\
    - \log(-\frac{\log \uniformrv_{i}}{\scorecumsum_i} + \exp (-\conditionalgumbelrv_{\plrv_{i-1}})) & i \geq 2,
\end{cases}
\end{equation}
 is a sample from the conditional distriubtion $p(\gumbelrv \mid \plrv, \score)$.
 
\end{proposition}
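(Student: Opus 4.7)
The plan is to prove the proposition by induction on the position in the permutation, reducing at each step to the categorical Gumbel-max conditional sampler of equation~(\ref{categoricalconditionalgumbelsampler}). The core idea is that conditioning on the complete sort order $\plrv = \arg\operatorname{sort}(\gumbelrv)$ can be peeled off one position at a time: revealing $\plrv$ is the same as sequentially revealing which remaining coordinate carries the current maximum. Since $\sum_j \exp \score_j = 1$ by assumption, the base case is immediate from (\ref{categoricalconditionalgumbelsampler}): given $\plrv_1 = \arg\max \gumbelrv$, the maximum value $\gumbelrv_{\plrv_1}$ is $\gumbel(0,1)$, independent of $\plrv_1$, which is exactly $\scorecumsum_1 = 1$ and the $i=1$ branch of (\ref{conditionalgumbelsampler}).

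For the inductive step, I would show that after $i-1$ rounds the already-revealed values $\conditionalgumbelrv_{\plrv_1}, \ldots, \conditionalgumbelrv_{\plrv_{i-1}}$ have been sampled, and the unrevealed $\gumbelrv_j$ for $j \in \{\plrv_i,\ldots,\plrv_\numitems\}$ are conditionally independent $\gumbel(\score_j, 1)$ variables truncated above at $\conditionalgumbelrv_{\plrv_{i-1}}$. Conditioning further on $\plrv_i$ is a second Gumbel-max step, now applied to truncated Gumbels sharing a common threshold. The max/argmax decoupling carries over: realizing the truncated joint law as the unconstrained product of independent Gumbels conditioned on the event $\{\gumbelrv_j \leq \conditionalgumbelrv_{\plrv_{i-1}} : j \geq i\}$, the ordinary Gumbel-max computation gives that the maximum is $\gumbel\!\bigl(\log \scorecumsum_i, 1\bigr)$ truncated at $\conditionalgumbelrv_{\plrv_{i-1}}$, the argmax is categorical with probabilities $\exp(\score_j)/\scorecumsum_i$, and the two are independent. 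Applying (\ref{categoricalconditionalgumbelsampler}) in this conditional model yields $\conditionalgumbelrv_{\plrv_i}$ as a $\gumbel(\log\scorecumsum_i, 1)$ truncated at $\conditionalgumbelrv_{\plrv_{i-1}}$, together with refreshed truncated Gumbels for the remaining coordinates, completing the induction.

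Finally, the closed form in (\ref{conditionalgumbelsampler}) is obtained by inverse-CDF sampling: for $F(x) = \exp(-\scorecumsum_i \exp(-x))$ the transform $x = F^{-1}\bigl(\uniformrv_i \cdot F(\conditionalgumbelrv_{\plrv_{i-1}})\bigr)$ simplifies to $x = -\log\!\bigl(-\log \uniformrv_i / \scorecumsum_i + \exp(-\conditionalgumbelrv_{\plrv_{i-1}})\bigr)$, exactly the expression in the proposition. After $\numitems$ rounds every coordinate of $\gumbelrv$ has been generated, so the procedure produces an exact sample from $p(\gumbelrv \mid \plrv, \score)$. The main obstacle is justifying the inductive hypothesis rigorously, i.e.\ that after marginalizing out the identities of the previously-revealed maxima and retaining only their values, the remaining coordinates are genuinely conditionally independent truncated Gumbels; this reduces to verifying that the max/argmax independence and the aggregation rule $\max \gumbelrv_j \sim \gumbel(\log \sum \exp \score_j, 1)$ survive the passage from unconstrained to commonly-truncated Gumbels, which is the only nontrivial calculation in the argument.
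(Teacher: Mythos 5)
Your proof is correct, and it reaches the same destination by a genuinely different organization of the argument. The paper never sets up an induction over conditional distributions: it writes the joint density $p(\plrv, \gumbelrv \mid \score) = \gumbelpdf{\score_{\plrv_1}}(\gumbelrv_{\plrv_1}) \prod_{i\geq 2} \gumbelpdf{\score_{\plrv_i}}(\gumbelrv_{\plrv_i}) I[\gumbelrv_{\plrv_{i-1}} \geq \gumbelrv_{\plrv_i}]$ and telescopes it purely algebraically, repeatedly multiplying and dividing by the c.d.f.\ $\gumbelcdf{\log \scorecumsum_{i+1}}(\gumbelrv_{\plrv_i})$ and invoking Lemma~\ref{appendixprooflemma} (the additive property $\gumbelcdf{\mu}\gumbelcdf{\nu} = \gumbelcdf{\log(\exp\mu+\exp\nu)}$) to peel off one Plackett--Luce factor and one truncated-Gumbel density per index; the conditional law is then read off from the resulting factorization, with no appeal to max/argmax independence. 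You instead run a probabilistic induction that reduces each step to the categorical conditional sampler of Eq.~\ref{categoricalconditionalgumbelsampler}, which requires the additional (true, but nontrivial) facts that under a common truncation the maximum of independent Gumbels is still $\gumbel(\log\scorecumsum_i, 1)$ truncated at the same threshold, that it is still independent of the argmax, and that the value of each revealed maximum is independent of the entire subsequent ordering --- precisely the obstacle you flag at the end. These facts all follow from the same additive-c.d.f.\ identity, so the two proofs share their computational core; the paper's density-telescoping buys a shorter, self-contained verification that sidesteps any independence claims, while your sequential-revelation argument buys more probabilistic insight (it explains \emph{why} the truncation threshold is $\conditionalgumbelrv_{\plrv_{i-1}}$ and the location parameter is a tail sum, and makes the connection to the categorical REBAR sampler explicit). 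To make yours fully rigorous you would still need to prove the truncated max/argmax decoupling, which is essentially Lemma~\ref{appendixprooflemma} in disguise. Your inverse-c.d.f.\ step recovering the closed form in Eq.~\ref{conditionalgumbelsampler} is identical to the paper's use of Eq.~\ref{truncatedgumbelreparametrization}.
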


The proof of the proposition is given in the appendix.

The sampling procedure from Proposition~\ref{plconditionalreparametrization} has two principal differences from the sampling scheme for the categorical case (see eq. \ref{categoricalconditionalgumbelsampler}). First, the truncation parameter $\conditionalgumbelrv_{\plrv_{i - 1}}$ now depends on the previous component $i-1$, while for the categorical case the truncation parameter is defined by the maximum component. Second, the location parameter is now a cumulative sum and depends on the previous scores.

\begin{figure*}[ht!]
  \centering
  \includegraphics[width=\textwidth]{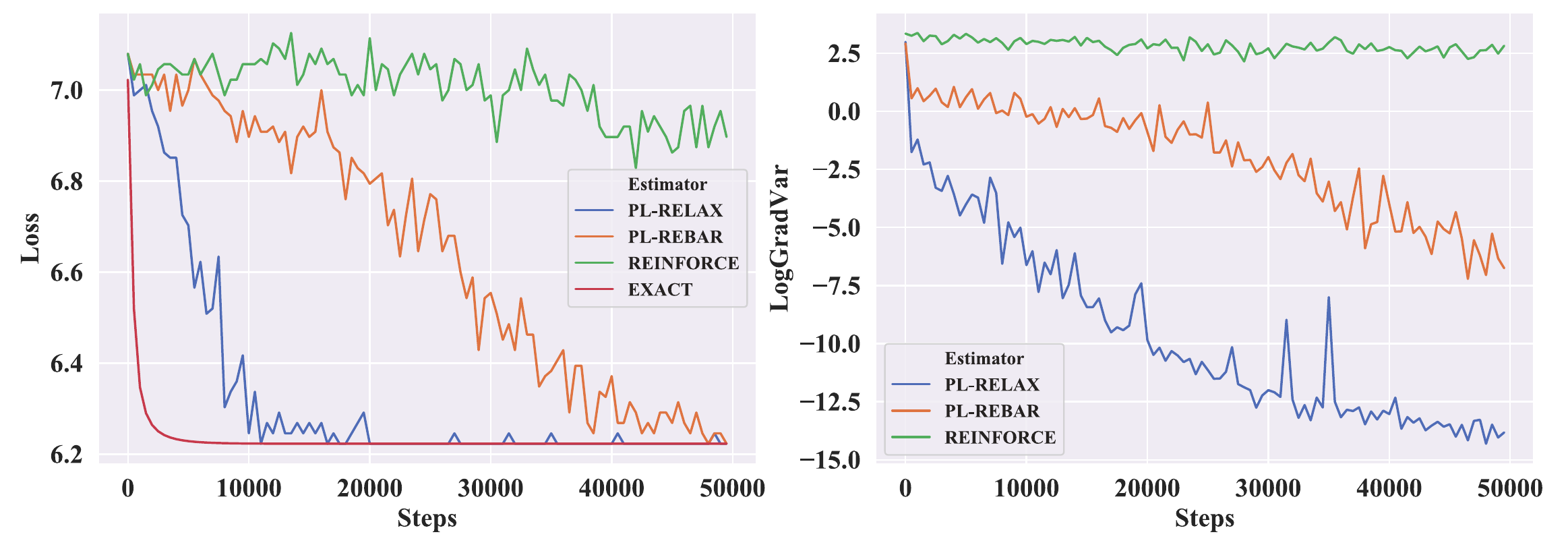}
  \caption{Training curves and log-variance of gradient estimators for different estimators on a toy problem: $\mathbb{E}_{p(b | \theta)} \|P_b - P_{0.05} \|_F^2$}
  \label{toyexp}
\end{figure*}
\section{Related Work}
\citet{jang2016categorical,maddison2016concrete} use the Gumbel distribution and Gumbel-max trick to define continuous relaxations of discrete distributions, by providing a gradient estimator which replaces the sampling of a categorical distribution with a differentiable sample from a Gumbel-Softmax distribution. 

The Gumbel-Softmax distribution does not scale to permutations, as distribution over $\numitems$-dimensional permutations is equivalent to that over $\numitems !$ categories. Recently, a line of work proposed various for optimization over permutations. \citet{birkhofflinderman} relaxes the discrete set of permutations to Birkhoff polytope, the set of doubly-stochastic matrices, and extend stick-breaking approach \cite{stickbreaking} to satisfy polytope constraints. \citet{sinkhorn} obtain doubly-stochastic matrices by applying the Sinkhorn operator. They use the Gumbel-Softmax distribution to define a distribution over latent matchings, the implicit Gumbel-Sinkhorn distribution. 
\citet{urs} define new relaxation to the set of unimodal row-stochastic matrices, the set of matrices that have a unique maximal element in every row.

\citet{relax} extend \citet{rebar} and derive control variate for black-box function optimization combining the REINFORCE estimator and reparametrization trick. \citet{yin2018arm} propose gradient estimator that estimates the gradients of discrete distribution parameters in an augmented space.

For the special case of TSP, \cite{bello2016neural,kool2018attention} introduce an amortized family of distributions over permutations using a deep autoregressive model and design control variates that exploit the structure of the loss function. 

\section{Experiments}
We demonstrate the effectiveness of the proposed method with a simple toy task similar to \citet{rebar} and then continue to the more challenging task of optimization over topological orderings for solving causal structure learning problems. Our PyTorch \cite{paszke2017automatic} implementation of the gradient estimators is available at https://github.com/agadetsky/pytorch-pl-variance-reduction .

\subsection{Toy Experiment}
As a proof of concept we perform an experiment in minimizing $\mathbb{E}_{p(b | \theta)} \|P_b - P_t \|_F^2 = \mathbb{E}_{p(b | \theta)} f(P_b)$ as a function of $\theta$ where $p(b|\theta)=$ Plackett-Luce$(b|\theta)$. $P_b$ is permutation matrix with elements $p_{i, b_i} = 1$ and $P_t$ is a matrix with $\frac{1}{\numitems} + t$ on the main diagonal and $\frac{1}{\numitems} - \frac{t}{\numitems-1}$ in the remaining positions. This problem can be seen as linear sum assignment problem with specifically constructed doubly stochastic matrix $P_t$. It is easy to note that taking $\numitems=2$ and $t=0.05$ leads to toy problem similar to that of \citet{rebar}. We focus on $t=0.05$ and $\numitems=8$ to enable computation of exact gradients. For the $\plrebar$ estimator we take $c_\phi(z) = \eta f(\sigma(z, \tau))$ where $\sigma(z, \tau)$ is the continuous relaxation of permutations described by \citet{urs}. For the $\plrelax$ estimator we take $c_\phi(z) = f(\sigma(z, \tau)) + \rho_\phi(z)$ where $\rho_\phi(z)$ is a simple neural network with two linear layers and ReLU activation between them. Figure \ref{toyexp} shows the relative performance and gradient log-variance of REINFORCE, $\plrebar$ and $\plrelax$. Although the REINFORCE estimator is unbiased, we can see that the variance of the estimator is too large even for the simple toy task, therefore the method is completely inapplicable for optimization over permutations. On the other hand, the proposed method significantly reduces variance of the gradient and thus converges to optimal. Also, similarly to the toy experiment from \citet{relax} paper, we observe better performance of the $\plrelax$ estimator due to free-form control variate parameterized by a neural network.

\begin{table*}[p]
\centering
\caption{Results for ER and SF graphs of 10 nodes}
\label{10nodes}
\resizebox{\textwidth}{!}{
\begin{tabular}{llHllllHlll}
\toprule
& \multicolumn{2}{l}{ER1} & & & & \multicolumn{2}{l}{ER4} & &  \\ \midrule
& \multicolumn{1}{l}{Val $\widehat{Q} - \widehat{Q}^*$} & & \multicolumn{1}{l}{SHD} & \multicolumn{1}{l}{SHD-CPDAG} & \multicolumn{1}{l}{SID}  & \multicolumn{1}{l}{Val $\widehat{Q} - \widehat{Q}^*$} & & \multicolumn{1}{l}{SHD} & \multicolumn{1}{l}{SHD-CPDAG} & \multicolumn{1}{l}{SID}  \\ \midrule
$\plrelax$ & -0.2$\pm$1.7 & -0.3$\pm$1.6 & 5.2$\pm$2.5 & 5.8$\pm$3.2 & 13.0$\pm$9.6 & 12.2$\pm$26.3 & 12.9$\pm$28.0 & 29.4$\pm$1.9 & 35.0$\pm$4.4 & 67.0$\pm$1.8 \\
SINKHORN$_{ECP}$     & 1.8$\pm$5.3 & 1.8$\pm$5.2 & 5.6$\pm$2.7 & 6.4$\pm$2.9 & 14.2$\pm$10.2 & 4.8$\pm$10.4 & 4.8$\pm$10.4 & 31.2$\pm$2.6 & 33.6$\pm$2.7 & 69.6$\pm$2.3   \\
URS$_{ECP}$    & 13.5$\pm$26.9 & 12.9$\pm$26.2 & 7.4$\pm$3.7 & 7.4$\pm$3.6 & 16.0$\pm$8.9 & 12.4$\pm$6.2 & 12.5$\pm$7.0 & 29.8$\pm$3.6 & 32.8$\pm$4.9 & 67.4$\pm$2.1   \\
SINKHORN & 85.9$\pm$101.2 & 81.0$\pm$96.8 & 12.0$\pm$3.7 & 12.0$\pm$3.7 & 29.4$\pm$17.3 & 4019.6$\pm$3138.0 & 3945.0$\pm$2957.4 & 36.6$\pm$2.4 & 37.8$\pm$1.7 & 79.8$\pm$6.9 \\
URS & 71.4$\pm$128.9 & 69.3$\pm$125.6 & 10.8$\pm$2.9 & 11.0$\pm$3.2 & 26.0$\pm$10.5 & 1894.9$\pm$1704.8 & 1821.5$\pm$1584.4 & 34.6$\pm$2.2 & 36.8$\pm$2.8 & 74.4$\pm$2.7 \\
GREEDY-SP & N/A && 2.2$\pm$2.9 & 2.4$\pm$3.9 & 8.8$\pm$15.4 & N/A && 29.8$\pm$1.1 & 35.4$\pm$5.0 & 71.6$\pm$3.8 \\
\midrule
RANDOM    & 122.3$\pm$184.3 & 121.9$\pm$186.2 & 18.8$\pm$2.5 & 18.8$\pm$2.6 & 27.2$\pm$14.3 & 10078.2$\pm$10770.5 & 9446.6$\pm$9296.3 & 25.4$\pm$3.2 & 33.0$\pm$4.5 & 65.8$\pm$5.9   \\

\toprule
& \multicolumn{2}{l}{SF1} & & & & \multicolumn{2}{l}{SF4} & &  \\ \midrule
& \multicolumn{1}{l}{Val $\widehat{Q} - \widehat{Q}^*$} & & \multicolumn{1}{l}{SHD} & \multicolumn{1}{l}{SHD-CPDAG} & \multicolumn{1}{l}{SID}  & \multicolumn{1}{l}{Val $\widehat{Q} - \widehat{Q}^*$} & & \multicolumn{1}{l}{SHD} & \multicolumn{1}{l}{SHD-CPDAG} & \multicolumn{1}{l}{SID}  \\ \midrule
$\plrelax$ & -0.7$\pm$0.3 & -0.7$\pm$0.3 & 2.2$\pm$1.5 & 2.4$\pm$1.5 & 2.6$\pm$2.2 & -1.3$\pm$1.4 & -1.3$\pm$1.6 & 8.2$\pm$3.1 & 8.8$\pm$3.3 & 15.4$\pm$5.9 \\
SINKHORN$_{ECP}$     & 0.6$\pm$2.9 & 0.6$\pm$3.0 & 2.8$\pm$3.2 & 3.0$\pm$3.2 & 7.6$\pm$12.3 & -0.3$\pm$4.0 & -0.2$\pm$4.3 & 6.6$\pm$1.5 & 7.0$\pm$1.8 & 11.8$\pm$4.0   \\
URS$_{ECP}$    & 1.6$\pm$1.8 & 1.6$\pm$1.8 & 5.0$\pm$1.7 & 5.4$\pm$2.2 & 7.0$\pm$2.2 & 2.1$\pm$2.3 & 2.3$\pm$2.5 & 12.8$\pm$2.5 & 13.4$\pm$2.2 & 24.8$\pm$5.6   \\
SINKHORN & 22.6$\pm$22.4 & 23.3$\pm$24.5 & 9.0$\pm$0.0 & 9.2$\pm$0.4 & 17.4$\pm$3.8 & 232.4$\pm$251.8 & 240.1$\pm$260.4 & 17.2$\pm$2.8 & 17.6$\pm$3.4 & 34.4$\pm$9.9 \\
URS & 10.1$\pm$5.2 & 10.2$\pm$5.7 & 9.6$\pm$1.2 & 9.6$\pm$1.2 & 14.6$\pm$2.1 & 69.6$\pm$81.6 & 69.9$\pm$79.3 & 14.6$\pm$1.4 & 14.6$\pm$1.2 & 29.2$\pm$5.8 \\
GREEDY-SP & N/A && 0.8$\pm$0.4 & 0.0$\pm$0.0 & 2.8$\pm$1.6 & N/A && 5.0$\pm$9.5 & 4.2$\pm$9.4 & 11.0$\pm$12.7 \\
\midrule
RANDOM    & 35.4$\pm$22.4 & 36.8$\pm$24.7 & 17.2$\pm$2.6 & 18.0$\pm$2.6 & 23.6$\pm$6.4 & 240.3$\pm$251.0 & 248.5$\pm$260.1 & 34.4$\pm$2.6 & 35.6$\pm$2.2 & 31.4$\pm$11.2   \\
\bottomrule
\end{tabular}
}

\caption{Results for ER and SF graphs of 20 nodes}
\label{20nodes}
\resizebox{\textwidth}{!}{
\begin{tabular}{llHllllHlll}
\toprule
& \multicolumn{2}{l}{ER1} & & & & \multicolumn{2}{l}{ER4} & &  \\ \midrule
& \multicolumn{1}{l}{Val $\widehat{Q} - \widehat{Q}^*$} & & \multicolumn{1}{l}{SHD} & \multicolumn{1}{l}{SHD-CPDAG} & \multicolumn{1}{l}{SID}  & \multicolumn{1}{l}{Val $\widehat{Q} - \widehat{Q}^*$} & & \multicolumn{1}{l}{SHD} & \multicolumn{1}{l}{SHD-CPDAG} & \multicolumn{1}{l}{SID}  \\ \midrule
$\plrelax$ & 15.7$\pm$27.3 & 15.8$\pm$27.2 & 14.4$\pm$5.3 & 16.0$\pm$6.2 & 61.0$\pm$48.7 & 468.8$\pm$208.4 & 475.2$\pm$211.7 & 71.0$\pm$5.9 & 72.6$\pm$3.9 & 289.6$\pm$9.1 \\
SINKHORN$_{ECP}$ & 10.4$\pm$8.7 & 10.5$\pm$8.7 & 15.8$\pm$4.7 & 17.0$\pm$6.0 & 84.8$\pm$56.3 & 2519.0$\pm$3715.2 & 2402.0$\pm$3448.9 & 78.0$\pm$6.1 & 78.8$\pm$5.5 & 302.2$\pm$15.8   \\
URS$_{ECP}$    & 27.5$\pm$34.2 & 27.8$\pm$34.7 & 20.6$\pm$6.3 & 21.4$\pm$7.2 & 96.8$\pm$74.6 & 1011.4$\pm$745.5 & 1022.2$\pm$762.4 & 75.8$\pm$2.9 & 76.6$\pm$2.9 & 300.2$\pm$20.3   \\
SINKHORN & 1651.2$\pm$3050.4 & 1654.6$\pm$3042.9 & 24.0$\pm$6.1 & 25.0$\pm$6.7 & 131.2$\pm$76.5 & 126284.6$\pm$194386.3 & 130166.8$\pm$202367.9 & 88.8$\pm$6.0 & 91.0$\pm$5.7 & 330.0$\pm$14.1 \\
URS & 1189.4$\pm$1815.5 & 1173.5$\pm$1753.0 & 26.4$\pm$8.4 & 26.6$\pm$8.6 & 134.2$\pm$75.0 & 7179677.6$\pm$7874489.3 & 7326627.3$\pm$8157542.1 & 93.0$\pm$3.8 & 94.4$\pm$4.5 & 328.0$\pm$11.5 \\
GREEDY-SP & N/A && 18.6$\pm$13.5 & 18.0$\pm$16.6 & 74.0$\pm$53.5 & N/A && 103.4$\pm$10.9 & 105.6$\pm$10.5 & 288.6$\pm$14.7 \\
\midrule
RANDOM    & 895.1$\pm$1270.3 & 905.2$\pm$1264.9 & 37.8$\pm$5.2 & 38.8$\pm$4.9 & 146.8$\pm$79.9 & 109891.2$\pm$74968.7 & 110237.3$\pm$74814.3 & 113.0$\pm$4.9 & 114.4$\pm$4.1 & 330.6$\pm$9.2   \\
\toprule
& \multicolumn{2}{l}{SF1} & & & & \multicolumn{2}{l}{SF4} & &  \\ \midrule
& \multicolumn{1}{l}{Val $\widehat{Q} - \widehat{Q}^*$} & & \multicolumn{1}{l}{SHD} & \multicolumn{1}{l}{SHD-CPDAG} & \multicolumn{1}{l}{SID}  & \multicolumn{1}{l}{Val $\widehat{Q} - \widehat{Q}^*$} & & \multicolumn{1}{l}{SHD} & \multicolumn{1}{l}{SHD-CPDAG} & \multicolumn{1}{l}{SID}  \\ \midrule
$\plrelax$ & -1.5$\pm$0.2 & -1.5$\pm$0.3 & 4.0$\pm$0.6 & 4.6$\pm$0.5 & 4.2$\pm$0.7 & -5.8$\pm$1.2 & -5.8$\pm$0.7 & 20.0$\pm$4.3 & 20.0$\pm$4.1 & 48.4$\pm$16.2 \\
SINKHORN$_{ECP}$     & 1.9$\pm$4.3 & 1.8$\pm$4.0 & 6.6$\pm$2.2 & 6.6$\pm$2.4 & 10.4$\pm$5.0 & -0.4$\pm$2.4 & -0.8$\pm$2.2 & 25.6$\pm$5.6 & 25.8$\pm$5.9 & 58.6$\pm$19.7   \\
URS$_{ECP}$    & 3.0$\pm$2.0 & 3.1$\pm$2.1 & 10.6$\pm$2.0 & 10.6$\pm$1.6 & 14.4$\pm$4.0 & 8.5$\pm$11.8 & 8.1$\pm$11.7 & 30.2$\pm$5.8 & 30.6$\pm$5.2 & 72.2$\pm$25.0   \\
SINKHORN & 38.3$\pm$26.2 & 35.9$\pm$23.1 & 19.0$\pm$0.0 & 19.0$\pm$0.0 & 35.0$\pm$2.4 & 158.2$\pm$99.9 & 158.5$\pm$104.9 & 44.6$\pm$5.8 & 44.8$\pm$6.1 & 103.6$\pm$20.8 \\
URS & 38.3$\pm$26.2 & 35.9$\pm$23.1 & 19.0$\pm$0.0 & 19.0$\pm$0.0 & 35.0$\pm$2.4 & 140.7$\pm$140.6 & 139.4$\pm$143.2 & 42.0$\pm$5.4 & 42.8$\pm$5.1 & 89.8$\pm$20.4 \\
GREEDY-SP & N/A && 2.0$\pm$1.4 & 0.0$\pm$0.0 & 7.0$\pm$5.1 & N/A && 50.6$\pm$31.5 & 49.8$\pm$32.3 & 69.0$\pm$43.2 \\
\midrule
RANDOM    & 94.0$\pm$36.4 & 90.7$\pm$33.8 & 36.2$\pm$2.6 & 36.6$\pm$2.3 & 48.6$\pm$14.7 & 635.5$\pm$182.6 & 627.8$\pm$164.5 & 98.2$\pm$6.1 & 99.2$\pm$5.5 & 168.8$\pm$29.6   \\
\bottomrule
\end{tabular}
}

\caption{Results for ER and SF graphs of 50 nodes}
\resizebox{\textwidth}{!}{
\label{50nodes}
\begin{tabular}{llHllllHlll}
\toprule
& \multicolumn{2}{l}{ER1} & & & & \multicolumn{2}{l}{ER4} & & \\ \midrule
& \multicolumn{1}{l}{Val $\widehat{Q} - \widehat{Q}^*$} & & \multicolumn{1}{l}{SHD} & \multicolumn{1}{l}{SHD-CPDAG} & \multicolumn{1}{l}{SID}  & \multicolumn{1}{l}{Val $\widehat{Q} - \widehat{Q}^*$} & & \multicolumn{1}{l}{SHD} & \multicolumn{1}{l}{SHD-CPDAG} & \multicolumn{1}{l}{SID}  \\ \midrule
$\plrelax$ & -1.8$\pm$1.3 & -2.0$\pm$1.3 & 19.2$\pm$6.9 & 20.6$\pm$7.8 & 103.2$\pm$55.5 & 1863.1$\pm$1703.2 & 1863.3$\pm$1773.8 & 220.6$\pm$42.8 & 221.4$\pm$43.5 & 1779.6$\pm$193.1 \\
SINKHORN$_{ECP}$     & 5.5$\pm$7.0 & 5.7$\pm$6.9 & 30.0$\pm$6.3 & 30.8$\pm$5.8 & 151.8$\pm$35.1 & 43463.9$\pm$70904.3 & 44395.2$\pm$73974.4 & 221.0$\pm$14.7 & 223.2$\pm$15.2 & 1846.4$\pm$158.3   \\
URS$_{ECP}$    & 10.3$\pm$4.7 & 10.5$\pm$5.0 & 41.0$\pm$2.4 & 40.0$\pm$2.7 & 177.6$\pm$17.1 & 22997.9$\pm$38346.1 & 22220.3$\pm$37322.8 & 239.4$\pm$31.6 & 240.2$\pm$31.5 & 1789.8$\pm$154.4  \\
SINKHORN & 90.3$\pm$35.8 & 91.7$\pm$33.8 & 49.6$\pm$4.3 & 49.6$\pm$4.3 & 275.0$\pm$42.5 & 231304.8$\pm$290019.0 & 234554.2$\pm$297849.2 & 248.6$\pm$18.5 & 250.4$\pm$19.1 & 1966.8$\pm$135.5 \\
URS & 90.3$\pm$35.8 & 91.7$\pm$33.8 & 49.6$\pm$4.3 & 49.6$\pm$4.3 & 275.0$\pm$42.5 & 546793216.7$\pm$984510739.7 & 555690870.2$\pm$1010508304.0 & 320.2$\pm$26.8 & 320.8$\pm$27.1 & 2119.0$\pm$130.5 \\
GREEDY-SP & N/A && 38.2$\pm$21.6 & 38.2$\pm$24.6 & 151.6$\pm$84.3 & N/A && 525.6$\pm$35.5 & 526.8$\pm$34.7 & 1951.4$\pm$50.3 \\
\midrule
RANDOM    & 271.0$\pm$71.6 & 277.1$\pm$69.6 & 99.4$\pm$9.3 & 99.8$\pm$9.5 & 301.2$\pm$60.4 & 477442.0$\pm$661243.9 & 490380.3$\pm$689358.0 & 360.8$\pm$23.5 & 361.0$\pm$23.2 & 2175.0$\pm$52.6   \\

\toprule
& \multicolumn{2}{l}{SF1} & & & & \multicolumn{2}{l}{SF4} & &  \\ \midrule
& \multicolumn{1}{l}{Val $\widehat{Q} - \widehat{Q}^*$} & & \multicolumn{1}{l}{SHD} & \multicolumn{1}{l}{SHD-CPDAG} & \multicolumn{1}{l}{SID}  & \multicolumn{1}{l}{Val $\widehat{Q} - \widehat{Q}^*$} & & \multicolumn{1}{l}{SHD} & \multicolumn{1}{l}{SHD-CPDAG} & \multicolumn{1}{l}{SID}  \\ \midrule
$\plrelax$ & -3.9$\pm$0.5 & -4.0$\pm$0.3 & 11.4$\pm$3.3 & 11.8$\pm$2.9 & 14.4$\pm$2.7 & -1.1$\pm$7.6 & -2.5$\pm$6.7 & 70.0$\pm$9.9 & 70.6$\pm$11.2 & 219.0$\pm$20.3 \\
SINKHORN$_{ECP}$     & 25.1$\pm$18.2 & 25.6$\pm$20.4 & 28.6$\pm$6.5 & 28.4$\pm$6.1 & 58.4$\pm$12.1 & 124.3$\pm$126.0 & 123.0$\pm$126.0 & 94.4$\pm$22.7 & 95.6$\pm$23.0 & 257.2$\pm$25.8   \\
URS$_{ECP}$    & 32.1$\pm$44.3 & 30.7$\pm$43.1 & 33.4$\pm$10.2 & 33.6$\pm$10.7 & 55.6$\pm$32.7 & 164.4$\pm$53.1 & 162.5$\pm$56.1 & 110.6$\pm$12.8 & 111.4$\pm$13.7 & 319.6$\pm$18.1   \\
SINKHORN & 138.2$\pm$68.2 & 132.5$\pm$65.6 & 49.0$\pm$0.0 & 49.0$\pm$0.0 & 110.6$\pm$5.5 & 10238.2$\pm$15850.1 & 10549.4$\pm$16302.8 & 139.0$\pm$8.3 & 139.6$\pm$8.1 & 387.0$\pm$37.2 \\
URS & 138.2$\pm$68.2 & 132.5$\pm$65.6 & 49.0$\pm$0.0 & 49.0$\pm$0.0 & 110.6$\pm$5.5 & 7966.9$\pm$4838.0 & 8191.0$\pm$5088.9 & 142.8$\pm$11.8 & 144.2$\pm$12.1 & 527.4$\pm$86.8 \\
GREEDY-SP & N/A && 38.8$\pm$39.3 & 35.4$\pm$39.6 & 54.8$\pm$20.6 & N/A && 381.2$\pm$76.2 & 384.2$\pm$77.0 & 963.0$\pm$475.7 \\
\midrule
RANDOM    & 380.1$\pm$207.8 & 368.4$\pm$200.7 & 97.8$\pm$7.3 & 97.8$\pm$7.3 & 155.4$\pm$31.2 & 10109.8$\pm$2027.0 & 10363.0$\pm$1889.2 & 312.0$\pm$14.9 & 312.4$\pm$15.0 & 807.0$\pm$101.7   \\
\bottomrule
\end{tabular}
}
\end{table*}

\subsection{Causal Structure Learning Through Order Search}
Directed acyclic graph (DAG) models are popular tools for describing causal relationships and for guiding attempts to learn them from data. Learning the structure of a DAG remains challenging because of the combinatorial acyclicity constraint. A common way to model causal relations is a structural equation model (SEM). Let $X$ be $\numitems$-dimensional random variable, then relations are described as follows:
\begin{equation}\label{sem}
    X_i = f_i(X_{pa(i)}, \varepsilon_i),
\end{equation}
where $pa(i)$ is the set of parent vertices of variable $X_i$ and $\varepsilon_i$ is independent noise. Edge set $\{\cup_{i=1}^{\numitems} \cup_{j \in pa(i)} j \xrightarrow{} i\}$ describes DAG $G$ on $\numitems$ vertices associated with joint distribution $\mathbb{P}_G(X) = \prod_{i=1}^{\numitems} \mathbb{P}(X_i | pa(X_i))$. The basic structure learning problem can therefore be formulated as follows: let $\bm{X}$ be data matrix consisting of $n$ i.i.d. samples of random variable $X$. Also let $\mathbb{D}$ be space of DAGs. Then, given observations $\bm{X}$ the task is to find DAG $G \in \mathbb{D}$ or so-called Bayesian Network for joint distribution $\mathbb{P}(X)$:
\begin{equation}\label{finddagtask}
    \min\limits_{G \in \mathbb{D}} Q(G, \bm{X})
\end{equation}
where $Q$ is function that scores DAG $G$ given data.

To incorporate permutations in the objective (\ref{finddagtask}) we consider parametrization of DAG adjacency matrix using nilpotent matrices which are upper triangular in basis induced by topological ordering, namely $W_G = P A P^T$ where $A$ is strictly upper triangular adjacency matrix which describes parent sets of variables and permutation matrix $P$ which describes topological ordering. Then optimization over DAGs can thus be seen as an optimization over topological orderings
\begin{equation}\label{order_score}
    \min\limits_{P \in \mathcal{P}_\numitems} \widehat{Q}(P, \bm{X}),
\end{equation}
where $\widehat{Q}$ scores topological ordering $P$ and $\mathcal{P}_\numitems$ is the set of permutation matrices of size $\numitems$. Optimization over $A$ is usually hidden in the computation of $\widehat{Q}$. It is worth noting that this approach is similar to order MCMC \cite{friedman2003}, however our work considers gradient-based optimization over permutations matrices rather than discrete order changes.

\subsubsection{Continuous data} We consider linear additive noise SEMs:
\begin{equation}\label{linearsem}
    X = W^T X + \varepsilon
\end{equation}
where $W = P A P^T$ and non-zero elements of $A$ describe linear coefficients and parent sets for each variable $X_i$.

As score function $\widehat{Q}$ we take regularized mean squared loss combined with sparsity-inducing $L1$ regularization term
\begin{equation}\label{plscorecontinuous}
    \widehat{Q}(P, \bm{X}) = \min\limits_{A \in \mathbb{A}} \frac{1}{2n} \| \bm{X} - P A P^T \bm{X}\|^2_F + \lambda \|\tovec(A)\|_1,
\end{equation}
where $\mathbb{A}$ is the set of strictly upper triangular matrices. Computing $\widehat{Q}$ itself involves optimization problem, which can be efficiently solved using accelerated proximal gradient for convex composite function optimization \cite{nesterov2013gradient}.
To apply the proposed method, we reformulate (\ref{order_score}) as variational optimization with respect to parameters of a Plackett-Luce distribution:
\begin{equation}
\label{eqn:relaxdag}
    \min\limits_{\theta} \mathop{\mathbb{E}}\limits_{p(b|\theta)} \widehat{Q}(P_b, \bm{X})
\end{equation}
where $p(b|\theta)=$ Plackett-Luce$(b|\theta)$, and $P_b$ is a permutation matrix with $p_{i, b_i} = 1$. For variational optimization, we only apply $\plrelax$ and treat $\widehat{Q}(P, \bm{X})$ as a black-box function to avoid unrolling the optimizer to compute gradients.

As a concurrent approach, we consider work by \citet{sinkhorn} which proposes relaxing optimization over a set of permutations to a set of doubly-stochastic matrices using the Sinkhorn operator. Another recent work by \citet{urs} proposes relaxation to the set of unimodal row-stochastic matrices (URS) which intersects the set of doubly-stochastic matrices and contains the set of all permutation matrices. Since these methods can't be used to optimize black-box functions we reformulate (\ref{eqn:relaxdag}) as:
\begin{equation}
\label{eqn:phidag}
    \min\limits_{\phi} \min\limits_{A \in \mathbb{A}} \frac{1}{2n} \| \bm{X} - P(\phi) A P(\phi)^T \bm{X}\|^2_F + \lambda \|\tovec(A)\|_1
\end{equation}
where $\phi$ are the parameters of the corresponding relaxation. We optimize (\ref{eqn:phidag}) coordinate-wise using gradient descent with respect to $\phi$ and accelerated proximal gradient optimization with respect to $A$. We refer to the optimization of this objective as SINKHORN or URS according to the used relaxation.

We also try an alternative approach for the above relaxations. Since $P(\phi)$ is not a permutation matrix during training we extend (\ref{eqn:phidag}) with an orthogonality constraint and replace $\|vec(A)\|$ with $H_{\mu}(vec(P A P^T))$ where $H_{\mu}$ is the Huber relaxation of $L1$ norm and $\mu$ is a hyperparameter controlling tightness of relaxation:
\begin{align}\label{eqn:phidagaugmented}
    \begin{aligned}
    & \min\limits_{\phi} \min\limits_{A \in \mathbb{A}} & & \frac{1}{2n} \| \bm{X} - P(\phi) A P(\phi)^T \bm{X}\|^2_F + \\
    & & & + \lambda H_\mu(\tovec(P(\phi) A P^T(\phi))) \\
    & \subjectto & & \|P(\phi) P^T(\phi) - I_\numitems\|^2_F = 0 \\
    \end{aligned}
\end{align}
We use an Augmented Lagrangian \cite{augmentedlagrangian} to solve this equality constrained optimization problem (ECP) (\ref{eqn:phidagaugmented}) and refer to the solutions as SINKHORN$_{ECP}$ or URS$_{ECP}$ correspondingly.

We simulated graphs from two well-known random graph models with different degree distributions: Erdos-Renyi random graphs and Scale-free networks with \textit{\numitems} and \textit{4 \numitems} expected number of edges, denoted by ER1, ER4, SF1, SF4 respectively. Given a random acyclic graph we assigned edge weights independently from $U([-2; -0.5] \cup [0.5; 2])$ to obtain weight matrix $W$. To generate data matrix $\bm{X}$ we follow generating process of linear SEM (\ref{linearsem}) with standard Gaussian noise.

As a sanity check, we also introduce a simple baseline. We generate Erdos-Renyi random graphs with the corresponding expected number of edges and refer to it as RANDOM baseline. For comparison, we also include the Greedy Sparse Permutation (Greedy-SP) algorithm \cite{GreedySP}. This algorithm casts DAG structure learning as a linear programming problem with graph sparsity as the linear objective function, and a sub-polytope of the permutohedron as the feasible region. Whilst this algorithm also searches permutations as a proxy to DAGs to reduce the size of the search space, it is in essence a constraint-based method - rather than optimising a DAG score function, it searches for the sparsest DAG which satisfies the conditional independence relations found. Conversely, our gradient-based method does not rely on these conditional independence tests, which typically require the simplifying assumptions of CI tests, and is able to use linear as well as non-linear objective functions (e.g. in the discrete data experiment, the quotient normalized maximum likelihood score is non-linear and non-differentiable).

For each method we report the score difference $\widehat{Q}$ (\ref{plscorecontinuous}) between learned and ground truth DAGs on additionally generated validation samples $\bm{X}_{val}$, as well as three DAG metrics from causal inference literature. The quoted score difference shows the effectiveness of our method for optimizing the chosen score function, while the DAG metrics show how well it performs on the problem itself. \textit{Structural hamming distance} (SHD) is the number of edge additions, removals, and reversals required to get from the learned structure to the ground truth. Multiple DAGs can represent the same set of conditional independence relations, forming a Markov equivalence class; this can be represented by a completed partially directed acyclic graph (CPDAG). We also report SHD-CPDAG - the SHD between the CPDAG the learned structure belongs to and that of the true structure. \textit{Structural interventional distance} (SID) \cite{sid} quantifies the distance between two DAGs in terms of their respective causal inference statements. This gives an indication of accuracy of computed interventions using the learned graph.

We consider graphs of 10, 20 and 50 nodes. 

For $\plrelax$ we take the mode of the distribution after training. For SINKHORN relaxation we apply the Hungarian algorithm to find the closest permutation matrix. For URS we use the argmax permutation property to obtain the permutation matrix. Regularization 
coefficient $\lambda$ is set to $0.5$ for all methods. 

Tables \ref{10nodes}-\ref{50nodes} show the performance of all methods for varying number of nodes $\numitems$ averaged across 5 random seeds (the error ranges represent standard deviation). We can see that the proposed method outperforms baselines in the majority of settings. Also, it is worth mentioning that SINKHORN and URS perform poorly in terms of score function values due to the fact that the optimization is carried out over the set of relaxed matrices. This leads to deterioration in score value $\widehat{Q}$ when relaxation is transformed to permutation. As we can see there is no such problem with ECP versions of relaxations, though they perform worse than $\plrelax$ and require additional constrained optimization techniques to be applied. Also, one more observation should be explained: $\plrelax$ almost always ends up with better solutions in terms of score function than the ground truth DAG, therefore solves the optimization problem well. However, it is not ideal in terms of metrics. \citet{identifiability} proved that given enough data, it is possible to identify the ground truth DAG if data was generated from linear SEM with Gaussian homogeneous noise. Authors used $L0$-regularized mean squared error score function, but it is non-convex and hard to optimize, therefore $L1$-regularization is used in practice. Because of relaxation of the $L0$ norm and finite amount of data all guarantees vanish, and we observe inconsistency between the metrics of interest and values of the surrogate score function $\widehat{Q}$.

\subsubsection{Discrete data}
Due to the discrete and nonlinear nature of categorical data, it cannot be modeled with the SEM defined previously. Discrete variable networks can however be modeled as generated by sampling each node's conditional probability table, depending only on the configuration of its parent nodes. In the standard general form this is $X_i = f_i(X_{pa(i)})$, where $f_i$ is assumed to be multinomial, thus 
$$ f_i(X_{pa(i)}) \sim Multinomial(\Theta_{X_{i}} | Pa(X_{i})) $$

where $\Theta_{X_{i}} | Pa(X_{i})$ are the conditional probabilities $\theta_{i,j,k} = P(X_i = k | Pa(X_{i}) = j)$. 

Rather than learning the optimal $A$ for a given $P$ by minimising a training loss, we can therefore instead try to maximise the marginal likelihood based on the above model
\begin{equation}
    Q(P, \textbf{X}) = \max \limits_{A \in \mathbb{A}} P(\textbf{X} | A, P)
\end{equation}
which can be found using the factorisation
\begin{equation}
\label{eqn:score_decomposition}
    P(\textbf{X}|A,P) = \prod\limits_{i=1}^{d} \prod\limits_{j=1}^{q_i} P(\textbf{X}_{i, pa(i)=j};\alpha).
\end{equation}
As a result of the decomposition of the score by node in equation (\ref{eqn:score_decomposition}), the \textit{maximum a posteriori} (MAP) parent set can be selected from the set of parents permitted by the topological ordering for each node, independently of the rest. Due to the ordering, the graph resulting from combining each of these MAP parent connections is guaranteed to be acyclic, thus the exact MAP DAG for a given ordering can be found. Due to the combinatorial size of even this reduced search space, the set of permitted parents for a given node is reduced further, to only those that cannot be easily proven to be conditionally independent - as determined by a standard constraint-based method (in this case the PC-stable algorithm \cite{pcstable}). As this finds the exact solution for a reduced search space, the result is an approximation of the best score possible for the ordering. Whilst this provides an approximate score for any given order, it is a non-differentiable black-box function; therefore whilst our method can be applied to this permutation optimization, options are severely limited - the SINKHORN and URS methods used for continuous SEM graph benchmarks for example cannot be used. For a simple evaluation, Table \ref{tab:alarm_result} shows the result of our method on data sampled from the standard ALARM network compared against random orders, and permutations optimized by order MCMC \cite{friedman2003}, all using the same MAP DAG method described above, maximizing the quotient normalized maximum likelihood score \cite{qnml}. Higher Val $\widehat{Q} - \widehat{Q}^*$ is better, other metrics lower is better. Whilst Table \ref{tab:alarm_result} shows our algorithm to be less effective than MCMC for this task, the comparison is not particularly favorable - MCMC is performed directly on permutations, rather than attempting to learn the Plackett-Luce distribution over permutations - thus the MCMC simply attempts to find a good local minimum in the score space. To give a lower bound to performance, we also compare to the MAP DAGs of 1000 random permutations, computed in the same way as for MCMC and our algorithm, showing sampling the learned Plackett-Luce distribution gives permutations far better than random. 

\begin{table}
\caption{Results for ALARM graph (37 nodes)}
\label{tab:alarm_result}
\resizebox{.95\columnwidth}{!}{
\begin{tabular}{lcccc}
\toprule
         & Val $\widehat{Q} - \widehat{Q}^*$  & SHD      & \small{SHD-CPDAG} & SID        \\ \midrule
\small{$\plrelax$} &  -15645.2$\pm$3255.8  & 14.6$\pm$1.7 & 19.0$\pm$2.3 & 214.2$\pm$31.8        \\
\small{SINKHORN}   & \multicolumn{4}{c}{N/A}           \\
\small{URS}        & \multicolumn{4}{c}{N/A}           \\ 
\small{ORDER MCMC}  & -13404.7$\pm$2224.6 & 8.6$\pm$1.1     & 10.6$\pm$0.5  & 104.4$\pm$20.8      \\ \midrule
RANDOM   & -75022.7$\pm$9647.7   & 25.8$\pm$3.7 & 30.0$\pm$3.9  & 478.8$\pm$70.8 \\ \bottomrule
\end{tabular}
}
\end{table}

\section{Conclusion}
In this work we proposed a gradient-based optimization method, with unique capabilities for application to Plackett-Luce distributions over permutations. A proof of concept experiment shows our method outperforms existing methods for differentiable objective functions, whilst also generalizing to non-differentiable black-box functions, and being applicable to permutation learning despite the factorial complexity. This allowed us to extend Plackett-Luce distribution based causal graphical model structure learning beyond the simple SEM based methods, to the more general case of DAGs of arbitrary variable types.

In future, our method could be combined with other standard scoring functions from Bayesian network literature - providing they decompose as described in equation (\ref{eqn:score_decomposition}) - for DAG structure learning of continuous data from different model types. Other potential applications include approximate inference for probabilistic models with latent permutations, routing problems and combinatorial problems for permutations.

\section*{Acknowledgements}
This work was partly supported by Sberbank AI Lab and UK EPSRC project EP/P03442X/1. Kirill Struminsky proved proposition~\ref{plconditionalreparametrization} and was supported by the Russian Science Foundation grant no.~19-71-30020. The authors thank NRU HSE for providing computational resources, NVIDIA for GPU donations, and Amazon for AWS Cloud Credits.

\section{Appendix} \label{plderivationssection}
We prove Proposition~\ref{plconditionalreparametrization} in this section. We first discuss the properties of Gumbel distribution. Then we discuss the generative processes for the densities used for $p(\gumbelrv \mid \plrv, \score)$ in Eq.~\ref{conditionalgumbelsampler}.
Then we show that $p(\plrv \mid \gumbelrv) p(\gumbelrv \mid \score) = p(\plrv \mid \score) p(\gumbelrv \mid \plrv, \score)$ for the unconditional Gumbel density $p(\gumbelrv \mid \score)$ and the Plackett-Luce distribution $p(\plrv \mid \score)$.
\subsection{Density for the Gumbel distribution and the truncated Gumbel distribution}
The density function of the Gumbel distribution with location parameter $\mu$ is
\begin{equation}
\gumbelpdf{\mu}(\gumbelrv) = \exp(-\gumbelrv + \mu) \exp(-\exp(-\gumbelrv + \mu))
\end{equation}
and the cumulative density function is
\begin{equation}
\gumbelcdf{\mu} = \exp(-\exp(-\gumbelrv + \mu)).
\end{equation}
Our derivation of the conditional distribution $p(\plrv \mid \gumbelrv, \score)$ relies on the additive property of the cumulative density function of the Gumbel distribution
\begin{align}
& \gumbelcdf{\log(\exp\mu + \exp\nu)}(\gumbelrv) = \nonumber\\ & \exp(-\exp(\gumbelrv)(\exp \mu + \exp \nu)) = \gumbelcdf{\mu}(\gumbelrv) \gumbelcdf{\nu}(\gumbelrv),
\end{align}
which we enfold in the following auxiliary claim.
\begin{lemma}\label{appendixprooflemma}
For permutation $\plrv \in \permutations$, score vector $\score \in \mathbb R^\numitems$ and $i=1,\dots,\numitems$ and the argument vector $\gumbelrv \in \mathbb R^\numitems$ we have
\begin{align}
& \gumbelpdf{\score_{\plrv_i}} (\gumbelrv_{\plrv_i}) \gumbelcdf{\log(\sum_{j=i + 1}^\numitems \exp \score_{\plrv_{j}})} (\gumbelrv_{\plrv_i}) \\
& = \frac{\exp \score_{\plrv_i}}{\sum_{j=i}^\numitems \exp \score_{\plrv_j}} \gumbelpdf{\log(\sum_{j=i}^\numitems \exp \score_{\plrv_{j}})} (\gumbelrv_{\plrv_i}).
\end{align}
\end{lemma}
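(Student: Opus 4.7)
The plan is to prove the lemma by direct algebraic manipulation, using exactly the additive property of the Gumbel CDF displayed immediately before the statement, namely $\Phi_{\log(\exp\mu + \exp\nu)}(z) = \Phi_\mu(z)\Phi_\nu(z)$. Set $\mu := \theta_{\plrv_i}$, $A := \sum_{j=i+1}^{\numitems} \exp\theta_{\plrv_j}$, and $B := \exp\mu + A = \sum_{j=i}^{\numitems}\exp\theta_{\plrv_j}$, and abbreviate $w := z_{\plrv_i}$. The goal reduces to showing
\begin{equation*}
\gumbelpdf{\mu}(w)\,\gumbelcdf{\log A}(w) \;=\; \tfrac{\exp\mu}{B}\,\gumbelpdf{\log B}(w).
\end{equation*}

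First I would split the pdf on the left into its two factors,
\begin{equation*}
\gumbelpdf{\mu}(w) \;=\; \exp(-w+\mu)\cdot \exp\!\bigl(-\exp(-w+\mu)\bigr) \;=\; \exp(-w+\mu)\cdot \gumbelcdf{\mu}(w),
\end{equation*}
recognizing that the exponential-of-exponential tail of the Gumbel pdf is precisely $\gumbelcdf{\mu}(w)$. Substituting and collecting the two CDF factors then gives $\gumbelpdf{\mu}(w)\gumbelcdf{\log A}(w) = \exp(-w+\mu)\,\gumbelcdf{\mu}(w)\,\gumbelcdf{\log A}(w)$.

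Second, I would apply the additive property of the Gumbel CDF to merge the two CDF factors, yielding $\gumbelcdf{\mu}(w)\gumbelcdf{\log A}(w) = \gumbelcdf{\log(\exp\mu+A)}(w) = \gumbelcdf{\log B}(w)$. Finally, I would multiply and divide by $B$ to massage the remaining linear factor into the shape of a Gumbel pdf with combined location $\log B$:
\begin{equation*}
\exp(-w+\mu)\,\gumbelcdf{\log B}(w) \;=\; \tfrac{\exp\mu}{B}\cdot\bigl[\exp(-w)\,B\bigr]\,\gumbelcdf{\log B}(w) \;=\; \tfrac{\exp\mu}{B}\,\gumbelpdf{\log B}(w),
\end{equation*}
using $\exp(-w+\log B) = \exp(-w)B$ in the bracketed factor.

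There is no real obstacle here; the lemma is a bookkeeping identity whose only substantive ingredient is the Gumbel CDF product rule already stated. The only place where one has to be careful is recognizing that the pdf factors as (linear part)$\times$(its own CDF), since this is what allows the CDF-merging step to absorb $A$ into the location, and that the resulting normalizing factor $\tfrac{\exp\mu}{B}$ is exactly the probability that $\plrv_i$ appears next in the Plackett--Luce factorization, which is why this lemma will slot cleanly into the subsequent verification that $p(\plrv\mid\gumbelrv)p(\gumbelrv\mid\score) = p(\plrv\mid\score)p(\gumbelrv\mid\plrv,\score)$.
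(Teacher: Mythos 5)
Your proof is correct and follows essentially the same route as the paper's: factor the Gumbel pdf as $\exp(-w+\mu)\,\gumbelcdf{\mu}(w)$, merge the two c.d.f.\ factors via the additive property into $\gumbelcdf{\log B}(w)$, and multiply and divide by $B=\sum_{j=i}^\numitems \exp\score_{\plrv_j}$ to reassemble the density $\gumbelpdf{\log B}(w)$ with the Plackett--Luce factor $\exp\score_{\plrv_i}/B$ left over. No substantive difference from the paper's argument.
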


\begin{proof}
For brevity, we denote $\exp \score_i$ as $p_i$. We then rewrite the density $\gumbelpdf{\log p_{\plrv_i}}(\gumbelrv_{\plrv_i})$ through the exponent $\exp(-\gumbelrv_{\plrv_i} + \log p_{\plrv_i})$ and c.d.f. $\gumbelcdf{\log p_{\plrv_{i}}}(\gumbelrv_{\plrv_i})$ and apply the additive property in Eq.~\ref{lemmaadditivepropertytransition}:
\begin{align}
& \gumbelpdf{\log p_{\plrv_i}} (\gumbelrv_{\plrv_i}) \gumbelcdf{\log(\sum_{j=i + 1}^\numitems p_{\plrv_{j}})} (\gumbelrv_{\plrv_i}) \\
& =  p_{\plrv_i} \exp(-\gumbelrv_{\plrv_i}) \gumbelcdf{\log p_{\plrv_{i}}} (\gumbelrv_{\plrv_i}) \gumbelcdf{\log(\sum_{j=i + 1}^\numitems p_{\plrv_{j}})} (\gumbelrv_{\plrv_i}) \\
& = p_{\plrv_i} \exp(-\gumbelrv_{\plrv_i}) \gumbelcdf{\log(\sum_{j=i}^\numitems p_{\plrv_{j}})} (\gumbelrv_{\plrv_i}) \label{lemmaadditivepropertytransition}\\
& = p_{\plrv_i} \frac{\sum_{j=i}^\numitems p_{\plrv_j}}{\sum_{j=i}^\numitems p_{\plrv_j}} \exp(-\gumbelrv_{\plrv_i})
\gumbelcdf{\log(\sum_{j=i}^\numitems p_{\plrv_j})} (\gumbelrv_{\plrv_i}) \\
& = \frac{p_{\plrv_i}}{\sum_{j=1}^\numitems p_{\plrv_j}} \gumbelpdf{\log( \sum_{j=i}^\numitems p_{\plrv_j})} (\gumbelrv_{\plrv_i}).
\end{align}
The last step collapses the exponent and the c.d.f. into the density function $\gumbelpdf{\log( \sum_{j=i}^\numitems p_{\plrv_j})} (\gumbelrv_{\plrv_i})$.
\end{proof}

Finally, to define the density of conditional distribution $p(\plrv \mid \gumbelrv, \score)$ we define the density of the truncated Gumbel distribution $\gumbelpdf{\mu}^{z_0} (\gumbelrv) \propto \gumbelpdf{\mu} (\gumbelrv) I[\gumbelrv \leq z_0]$:
\begin{equation}
\gumbelpdf{\mu}^{z_0} (\gumbelrv) = \frac{\gumbelpdf{\mu} (\gumbelrv)}{\gumbelcdf{\mu} (z_0)} (\gumbelrv) I[\gumbelrv \leq z_0],
\end{equation}
where the superscript $z_0$ denotes the truncation parameter.

\subsection{Reparametrization for the Gumbel distribution and the truncated Gumbel distribution}
The reparametrization trick requires representing a draw from a distribution as a deterministic transformation of a fixed distribution sample and a distribution parameter. For a sample $\gumbelrv$ from the Gumbel distribution $\gumbel(\mu, 1)$ with location parameter $\mu$ the representation is
\begin{equation}
\gumbelrv = \mu - \log(-\log\uniformrv),\ \uniformrv \sim \uniform.
\end{equation}
For the Gumbel distribution truncated at $z_0$ \cite{maddison2014sampling} proposed an analogous representation
\begin{align}
\gumbelrv &= \mu - \log(-\log \uniformrv + \exp(-z_0 + \mu)) \nonumber \\
          &= -\log \left(-\frac{\log\uniformrv}{\exp \mu} + \exp(-z_0) \right) \label{truncatedgumbelreparametrization}\\
& \uniformrv \sim \uniform.
\end{align}
In particular, the sampling schemes in Eq.~\ref{categoricalconditionalgumbelsampler} and Eq.~\ref{conditionalgumbelsampler} generate samples from the truncated Gumbel distribution.

\subsection{The derivation of the conditional distribution}
We now derive the conditional distribution and the sampling scheme defined in Proposition~\ref{plconditionalreparametrization}.

The joint distribution of the permutation $\plrv$ and the Gumbel samples $\gumbelrv$ is
\begin{align}
p(\plrv, \gumbelrv \mid \score) = p(\plrv \mid \gumbelrv) p(\gumbelrv \mid \score) \\ 
= \gumbelpdf{\score_{\plrv_1}}(\gumbelrv_{\plrv_1}) \prod_{i=2}^{\numitems} \left( \gumbelpdf{\score_{\plrv_i}}(\gumbelrv_{\plrv_i}) I[\gumbelrv_{\plrv_{i - 1}} \geq \gumbelrv_{\plrv_{i}}] \right)
\end{align}
We first multiply and divide the joint density by the c.d.f. $\gumbelcdf{\log(\sum_{i=2}^\numitems \exp \score_{\plrv_i})} (\gumbelrv_{\plrv_1})$ and apply Lemma~\ref{appendixprooflemma}
\begin{align}
\frac{\gumbelcdf{\log(\sum_{i=2}^\numitems \exp \score_{\plrv_i})} (\gumbelrv_{\plrv_1})}{\gumbelcdf{\log(\sum_{i=2}^\numitems \exp \score_{\plrv_i})} (\gumbelrv_{\plrv_1})} \gumbelpdf{\score_{\plrv_1}} (\gumbelrv_{\plrv_1}) \prod_{i=2}^\numitems \dots \\
= \frac{\exp \score_{\plrv_1}}{\sum_{i=1}^\numitems \exp \score_{\plrv_i}} \frac{\gumbelpdf{\log(\sum_{i=1}^\numitems \exp \score_{\plrv_i})} (\gumbelrv_{\plrv_1})}{\gumbelcdf{\log(\sum_{i=2}^\numitems \exp \score_{\plrv_i})} (\gumbelrv_{\plrv_1})} \prod_{i=2}^\numitems \dots.
\end{align}
Next, we apply Lemma~\ref{appendixprooflemma} to combine the c.d.f. in the denominator $\gumbelcdf{\log(\sum_{j=i}^\numitems \exp \score_{\plrv_j})} (\gumbelrv_{\plrv_{i-1}})$ and the term $\gumbelpdf{\score_{\plrv_i}}(\gumbelrv_{\plrv_i}) I[\gumbelrv_{\plrv_{i - 1}} \geq \gumbelrv_{\plrv_{i}}]$ inside the product
\begin{align}
& \frac{\gumbelpdf{\score_{\plrv_i}}(\gumbelrv_{\plrv_i}) I[\gumbelrv_{\plrv_{i - 1}} \geq \gumbelrv_{\plrv_{i}}]}{\gumbelcdf{\log(\sum_{j=i}^\numitems \exp \score_{\plrv_j})} (\gumbelrv_{\plrv_{i - 1}})} \\ 
 & = 
\frac{\gumbelpdf{\score_{\plrv_i}}(\gumbelrv_{\plrv_i}) I[\gumbelrv_{\plrv_{i - 1}} \geq \gumbelrv_{\plrv_{i}}]}{\gumbelcdf{\log(\sum_{j=i}^\numitems \exp \score_{\plrv_j})} (\gumbelrv_{\plrv_{i - 1}})}
 \tfrac{\gumbelcdf{\log(\sum_{j=i + 1}^\numitems \exp \score_{\plrv_j})} (\gumbelrv_{\plrv_{i}})}{\gumbelcdf{\log(\sum_{j=i + 1}^\numitems \exp \score_{\plrv_j})} (\gumbelrv_{\plrv_{i}})} \\
& = \tfrac{\exp \score_{\plrv_i}}{\sum_{j=i}^\numitems \exp \score_{\plrv_j}} \frac{\gumbelpdf{\log(\sum_{j=i}^\numitems \exp \score_{\plrv_j})}^{\gumbelrv_{\plrv_{i - 1}}} (\gumbelrv_{\plrv_i})}{\gumbelcdf{\log(\sum_{j=i + 1}^\numitems \exp \score_{\plrv_j})}(\gumbelrv_{\plrv_i})} 
\end{align}
and obtain the truncated distribution $\gumbelpdf{\log(\sum_{j=i}^\numitems \exp \score_{\plrv_j})}^{\gumbelrv_{\plrv_{i - 1}}} (\gumbelrv_{\plrv_i})$ along with one factor of the Plackett-Luce probability $\tfrac{\exp \score_{\plrv_i}}{\sum_{j=i}^\numitems \exp \score_{\plrv_j}}$.
Also, after the transformation the summation index in the denominator c.d.f. changes from $i$ to $i + 1$. This gives us an induction step that we apply sequentially for $i=2,\dots, \numitems - 1$.
For $i = \numitems$ the denominator c.d.f. $\gumbelcdf{\log \exp \score_\numitems}(\gumbelrv_{\plrv_{\numitems - 1}})$ and the product term $\gumbelpdf{\log\exp\score_\numitems}(\gumbelrv_{\plrv_\numitems}) I[\gumbelrv_{\numitems - 1} \geq \gumbelrv_\numitems]$ combine into the truncated Gumbel distribution with density $\gumbelpdf{\log \exp \score_\numitems}^{\gumbelrv_{\plrv_{\numitems - 1}}}(\gumbelrv_{\plrv_\numitems})$.

As a result, we rearrange $p(\plrv, \gumbelrv \mid \score)$ into the product of the truncated Gumbel distribution densities $p(\gumbelrv \mid \plrv, \score)$ and the probability of the Plackett-Luce distribution $p(\plrv \mid \score)$:
\begin{equation}
\prod_{i=1}^\numitems \tfrac{\exp \score_{\plrv_i}}{\sum_{j=i}^\numitems \exp \score_{\plrv_j}}
\left( \gumbelpdf{0} (\gumbelrv_{\plrv_1})
\prod_{i=2}^\numitems \gumbelpdf{\log \sum_{j=i}^\numitems \exp \score_j}^{\gumbelrv_{\plrv_{i - 1}}}(\gumbelrv_{\plrv_i}) \right).
\end{equation}

Finally, to obtain the claim of Proposition~\ref{plconditionalreparametrization} we apply the reparametrized sampling scheme defined in Eq.~\ref{truncatedgumbelreparametrization}.

\bibliography{references}
\bibliographystyle{aaai}
\end{document}